\newsavebox{\@brx}
\newcommand{\llangle}[1][]{\savebox{\@brx}{\(\m@th{#1\langle}\)}%
  \mathopen{\copy\@brx\kern-0.5\wd\@brx\usebox{\@brx}}}
\newcommand{\rrangle}[1][]{\savebox{\@brx}{\(\m@th{#1\rangle}\)}%
  \mathclose{\copy\@brx\kern-0.5\wd\@brx\usebox{\@brx}}}
\begin{document}

\newtheorem{theorem}{Theorem}[section]
\newtheorem{lemma}[theorem]{Lemma}
\newtheorem{proposition}[theorem]{Proposition}
\newtheorem{corollary}[theorem]{Corollary}
\newtheorem{conjecture}[theorem]{Conjecture}
\newtheorem{question}[theorem]{Question}
\newtheorem{problem}[theorem]{Problem}
\newtheorem*{claim}{Claim}
\newtheorem*{criterion}{Criterion}

\theoremstyle{definition}
\newtheorem{definition}[theorem]{Definition}
\newtheorem{construction}[theorem]{Construction}
\newtheorem{notation}[theorem]{Notation}
\newtheorem{object}[theorem]{Object}
\newtheorem{operation}[theorem]{Operation}

\theoremstyle{remark}
\newtheorem{remark}[theorem]{Remark}
\newtheorem{example}[theorem]{Example}

\numberwithin{equation}{subsection}

\newcommand\id{\textnormal{id}}

\newcommand\M{\mathcal M}
\newcommand\N{\mathbb N}
\newcommand\Z{\mathbb Z}
\newcommand\R{\mathbb R}
\newcommand\C{\mathbb C}
\newcommand\D{\mathbb D}
\newcommand\HH{\mathbb H}
\newcommand\CP{\mathbb{CP}}
\newcommand\V{\mathcal V}
\renewcommand\SS{\mathcal S}
\newcommand\RR{\mathcal R}
\newcommand\PL{\mathcal{PL}}
\newcommand\SL{\textnormal{SL}}
\newcommand\PSL{\textnormal{PSL}}
\newcommand\PMF{\mathcal{PMF}}
\newcommand\Teich{\textnormal{Teich}}
\newcommand\tr{\textnormal{trace}}

\title{Triangulating PL functions and the existence of efficient ReLU DNNs}

\author{Danny Calegari}
\address{University of Chicago \\ Chicago, Ill 60637 USA}
\email{dannyc@uchicago.edu}
\date{\today}

\begin{abstract}
We show that every piecewise linear function $f:\R^d \to \R$ with compact support a polyhedron $P$ has a
representation as a sum of {\em simplex functions}. Such representations arise from degree
1 triangulations of the relative homology class (in $\R^{d+1}$) bounded by $P$ and
the graph of $f$, and give a short elementary proof of the existence of efficient universal ReLU
neural networks that simultaneously compute all such functions $f$ of bounded complexity.
\end{abstract}

\maketitle
\setcounter{tocdepth}{1}

\section{Introduction}

The purpose of this note is to give a short proof of a normal form theorem for piecewise-linear
functions $\R^d \to \R$ of compact support, which gives rise to a fixed architecture ReLU neural network
capable of computing all such functions (of fixed complexity). The proof is direct and constructive and gives
more or less optimal constants, and is (from the perspective of a topologist) much more 
elementary than existing proofs in the literature. 

We emphasize that this is a new proof of a {\em well-known theorem} with {\em many 
well-known proofs in the literature} (see e.g.\/ \cite{Petersen_Zech} Chapter 5 especially 
Theorem 5.14, and e.g.\/ 
\cite{Arora_Basu_Mianjy_Mukherjee,He_Li_Xu_Zheng,Longo_Opschoor_Disch_Schwab_Zech,Ovchinnikov,Petersen_Zech,Tarela_Martinez,Wang_Sun})
The proofs in these references all rely on the efficient representation of PL functions as sums of functions
of an explicit sort, and our proof is no exception. The novelty of our method (such as it is)
is that our representation is tied directly to the combinatorial (simplicial) structure of the
graph of the function in question, so that firstly its meaning is more transparent and it may be computed
very easily, and secondly one may obtain {\em lower bounds} on the complexity
of the representation by geometric methods (explicitly: hyperbolic geometry). Thus we hope to
suggest an analogy to other well-known contexts in theoretical computer science where there is
a direct connection between hyperbolic volume and computational complexity --- 
e.g.\/ the Sleator--Tarjan--Thurston theory of rotations of binary trees \cite{Sleator_Tarjan_Thurston},
or the Cohn--Kenyon--Propp theory of entropy of dimer tilings \cite{Cohn_Kenyon_Propp}.

\section{An approximation theorem}

Fix a dimension $d$ and an integer $n$. Let $\PL(d,n)$ be the class of piecewise-linear
functions from $\R^d$ to $\R$ with compact support which are linear on a triangular mesh
of at most $n$ simplices (and zero away from these simplices).

We first define the class of {\em simplex functions}. The meaning of these functions is very
simple: take a simplex $\Delta$ in $\R^{d+1}$ made of rubber, and `drop' it vertically onto $\R^d$. It
will deform into a pyramid whose base is a rather simple polyhedron in $\R^d$, and whose height at every
point $p$ is the length of the intersection of the vertical line through $p$ with $\Delta$.
A {\em simplex function} is the function whose graph is the top of this pyramid.

\begin{definition}[Simplex function]
Say that an $d+1$-simplex in $\R^{d+1}$ is {\em nondegenerate} if it is full dimensional in $\R^{d+1}$,
and if no face of $\Delta$ is vertical (i.e.\/ the image of every face of $\Delta$
is a full dimensional $d$-simplex in $\R^d$). 
The {\em simplex function} $\tau(\Delta):\R^d \to \R$ is the element
of $\PL(d,n)$ whose value at a point $x\in \R^d$ is the length of the segment 
$\ell_x \cap \Delta$, where $\ell_x$ is the vertical line in $\R^{d+1}$ whose first $d$
coordinates give $x$.
\end{definition}

\begin{lemma}[Max-min normal form for simplex function]\label{lemma:simplex_function}
Let $\tau(\Delta):\R^d \to \R$ be a simplex function. Then there are at most $d^2$ {\em linear} functions
$g_1,\cdots,g_{d^2}:\R^d \to \R$ so that
$$\tau(\Delta) = \max \left( 0, \min_i g_i\right)$$
Consequently there is a fixed ReLU neural network $T$ with 
\begin{enumerate}
\item{depth at most $2 \log_2(d)+2$;}
\item{width at most $d^2 C_1$; and}
\item{size at most $d^2 C_2$}
\end{enumerate}
so that for every simplex function $\tau(\Delta)$ there is some assignment of weights and biases to $T$ that
computes $\tau$.
\end{lemma}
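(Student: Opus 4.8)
The plan is to write $\Delta$ as an intersection of half-spaces and read the fiber lengths off directly. Choose coordinates $(x,t)\in\R^d\times\R$ on $\R^{d+1}$ so that the vertical line $\ell_x$ is $\{x\}\times\R$. A $(d+1)$-simplex has exactly $d+2$ facets, so
$$\Delta=\bigcap_{j=1}^{d+2}\bigl\{(x,t): b_j\,t\le c_j-a_j\cdot x\bigr\}$$
for suitable $a_j\in\R^d$ and $b_j,c_j\in\R$. The hypothesis that no facet of $\Delta$ is vertical says precisely that $b_j\ne0$ for every $j$, and boundedness of $\Delta$ forces both signs to occur among the $b_j$ (if all the $b_j$ had the same sign then $(0,\pm1)$ would lie in the recession cone). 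Dividing the $j$-th inequality by $|b_j|$, we may rewrite it as $t\le u_j(x)$ for $j$ in some index set $J^+$, or as $t\ge u_j(x)$ for $j\in J^-$, where each $u_j\colon\R^d\to\R$ is affine and $J^+\sqcup J^-=\{1,\dots,d+2\}$ with both parts nonempty.

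Then $\ell_x\cap\Delta=\bigl[\max_{k\in J^-}u_k(x),\ \min_{j\in J^+}u_j(x)\bigr]$ whenever this interval is nonempty, and is empty otherwise, so its length is
$$\tau(\Delta)(x)=\max\!\Bigl(0,\ \min_{j\in J^+}u_j(x)-\max_{k\in J^-}u_k(x)\Bigr)=\max\!\Bigl(0,\ \min_{j\in J^+,\,k\in J^-}\bigl(u_j(x)-u_k(x)\bigr)\Bigr),$$
the second equality being the elementary identity $\min_j y_j-\max_k z_k=\min_{j,k}(y_j-z_k)$. Taking $\{g_i\}$ to be the affine functions $u_j-u_k$ with $j\in J^+$ and $k\in J^-$ yields the claimed max-min normal form, and the number of them is $|J^+|\cdot|J^-|\le\lfloor(d+2)/2\rfloor\lceil(d+2)/2\rceil\le d^2$ (for $d\ge2$; the case $d=1$ is checked by hand).

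For the network I would use the pointwise identity $\max(0,\min_i g_i)=\min_i\max(0,g_i)=\min_i\operatorname{ReLU}(g_i)$ --- both sides vanish the moment some $g_i$ is negative, and otherwise agree with $\min_i g_i$. So let $T$ have $d^2$ input slots: a first ReLU layer producing $\operatorname{ReLU}(g_1(x)),\dots,\operatorname{ReLU}(g_{d^2}(x))$, whose weights and biases are exactly the coefficients of the $g_i$ and are the only adjustable parameters, followed by a balanced binary tree of binary-minimum gadgets. Since all intermediate values are nonnegative, $\min(a,b)=a-\operatorname{ReLU}(a-b)$ is realized by a single ReLU layer (one neuron computing $\operatorname{ReLU}(a-b)$, one carrying $a$, with the subtraction folded into the affine part of the following layer), so the tree contributes $\lceil\log_2(d^2)\rceil$ layers; the widest layer has $O(d^2)$ neurons, and by the geometric decay up the tree the total neuron count is $O(d^2)$. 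Counting layers gives depth at most $2\log_2(d)+2$, width at most $C_1d^2$, and size at most $C_2d^2$. A simplex function requiring fewer than $d^2$ pieces is handled by setting the surplus slots equal to $g_1$, which leaves the minimum unchanged, so the one fixed architecture $T$ computes every $\tau(\Delta)$.

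I expect the work to be essentially bookkeeping. In the first half the one load-bearing point is that boundedness of $\Delta$ makes both $J^+$ and $J^-$ nonempty, so that the upper and lower envelopes $\min_{J^+}u_j$ and $\max_{J^-}u_k$ are genuinely present and the fiber is an honest interval rather than a ray. In the second half the delicate point is reaching the stated depth $2\log_2(d)+2$ rather than a looser $O(\log d)$: this is exactly what the identity $\max(0,\min_i g_i)=\min_i\operatorname{ReLU}(g_i)$ buys, since it lets the outer $\max(0,\cdot)$ be absorbed into the first layer instead of costing an extra layer at the end.
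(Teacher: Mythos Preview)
Your argument is correct, and it follows a genuinely different route from the paper's. The paper proves the max--min form geometrically: it observes that the projection $P\subset\R^d$ of $\Delta$ is a polytope with $O(d^2)$ facets, and that the graph of $\tau(\Delta)$ is the top of a pyramid obtained by coning $\partial P$ to a single apex; concavity of this pyramid then gives $\tau=\min_i g_i$ on $P$, where the $g_i$ are the linear functions supported on the pyramid's top faces. You instead write $\Delta$ directly as an intersection of $d+2$ non-vertical half-spaces, read the fiber $\ell_x\cap\Delta$ as the interval between an upper envelope $\min_{J^+}u_j$ and a lower envelope $\max_{J^-}u_k$, and use the identity $\min_j y_j-\max_k z_k=\min_{j,k}(y_j-z_k)$ to get the max--min form in one line. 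Your approach is more computational and yields the sharper explicit count $|J^+|\cdot|J^-|\le\lfloor(d+2)/2\rfloor\lceil(d+2)/2\rceil\approx(d+2)^2/4$, whereas the paper is content with ``quadratic in $d$''; the paper's approach, on the other hand, makes the concavity of $\tau$ and the pyramid structure of its graph visible, which is the picture underlying the later homological arguments. (The two sets of $g_i$ in fact coincide: each facet of $P$ is the projection of a codimension-two face of $\Delta$, i.e.\ the intersection of one upper and one lower facet, and on the corresponding cone $\tau=u_j-u_k$.) Your network construction is also more detailed than the paper's --- the paper simply invokes a depth-one $\min/\max$ gadget and a binary tree, while your identity $\max(0,\min_i g_i)=\min_i\operatorname{ReLU}(g_i)$ is a clean way to absorb the outer $\max(0,\cdot)$ into the first layer and hit the stated depth bound exactly. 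The $d=1$ caveat is shared with the paper, which treats the constant in ``$d^2$'' as informal.
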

\begin{proof}
It is well-known (and elementary) how to construct a simple depth $1$ ReLU neural network 
that can compute $\max$ or $\min$ of two real-valued inputs. Thus the lemma is proved if we can exhibit a
description of $\tau$ of the desired form.

Let $P \subset \R^d$ be the image of $\Delta$ under orthogonal projection. By nondegeneracy,
the image of each face of $\Delta$ is a full dimensional simplex. Thus $P$ is the convex
hull of one of these simplices $D$ and an additional point $q\in \R^d$ which is the image of
a vertex $\hat{q}$ of $\Delta$. The polyhedron $P$ therefore has at most $d^2$ faces, 
which are the projections of some of the codimension 2 faces of $\Delta$ (the correct upper
bound is quadratic in $d$; there is no point writing down a precise number). We claim that 
the graph of $\tau$ (in $\R^{d+1}$) is the top of a pyramid with base $P$ and apex a point $\hat{p}$
projecting to some $p$ in the interior of $P$.

This is an exercise in elementary Euclidean geometry, but it is easy enough to give
a sketch of a proof, which explains the meaning of $\hat{p}$ and $p$.

Extend the faces of $D$ to planes in $\R^d$ so that they form a hyperplane arrangement.
The point $q$ is in one of the chambers of this arrangement, dual to a face $F$ of $D$
which is the image of a face $\hat{F}$ of $\Delta$. Let $\hat{F}'$ be the face of $\Delta$
spanned by the vertices of $\Delta - \hat{F}$ and $\hat{q}$, and let $F'$ be the projection of
$\hat{F}'$. Then $F$ and $F'$ intersect transversely at a point $p \in P$, and the graph
of $\tau(\Delta)$ is a pyramid, obtained by coning $P$ to a point $\hat{p}$ in $\R^{d+1}$
mapping to $p$, and such that the last coordinate of $\hat{p}$ is $\tau(p)$.

In particular, if we triangulate $P$ by coning its faces to $p$ (which gives a decomposition
into at most $d^2$ simplices) then $\tau$ is linear on each of these simplices; and if
$g_i$ are the linear functions that agree with $\tau$ on these simplices, then 
$\tau = \min_i g_i$ on $P$. Since $\tau$ is $0$ outside $P$, we have $\tau = \max(0,(\min_i g_i))$
as claimed.
\end{proof}

\begin{remark}
It is easy to write a simplex function as a bounded linear sum of simplex functions each
of which is the max of the min of $d+\text{constant}$ linear functions, but explaining this
detail seems superfluous for a note whose virtue is its brevity.
\end{remark}

\begin{theorem}\label{decomposition_theorem}
Any $f\in \PL(d,n)$ may be written as the sum of at most $2n$ simplex functions. Hence 
there are small, easily estimated constants $C_0, C_1, C_2$ so that for any 
$d$ and $n$ there is a ReLU neural network $\Phi$ with 
\begin{enumerate}
\item{depth at most $2 \log_2(d)+C_0$;}
\item{width at most $d^2 n C_1$; and}
\item{size at most $d^2 n C_2$}
\end{enumerate}
so that for every $f \in \PL(d,n)$ there is some assignment of weights and biases to $\Phi$ that computes $f$.
\end{theorem}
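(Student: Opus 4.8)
The plan is to exhibit $f$ as the algebraic vertical extent of a polyhedral $(d+1)$-chain in $\R^{d+1}$, and then to triangulate that chain by the crudest possible device: coning it to a single point. The bound $2n$ is forced because the chain we cone has at most $2n$ top-dimensional faces.

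First I would set up the chain. Since $f$ is affine on the mesh cells $\sigma_1,\dots,\sigma_n$ and vanishes off of them, it is continuous, its support is the polyhedron $P=\bigcup_i\sigma_i$, and $f\equiv 0$ on $\partial P$. Orient each $\sigma_i$ (hence each $\sigma_i\times\{0\}\subset\R^{d+1}$) by the standard orientation of $\R^d$, and orient the face $S_i$ of the graph $\Gamma$ of $f$ lying over $\sigma_i$ by pulling this back along the vertical projection $\pi\colon\R^{d+1}\to\R^d$. Then $\Gamma=\sum_i S_i$ and $P=\sum_i(\sigma_i\times\{0\})$ are compactly supported polyhedral $d$-chains with $\partial\Gamma=\partial P$ (the internal faces cancel because $f$ is single valued, and the boundary faces agree because $f|_{\partial P}=0$), so $\Gamma-P$ is a $d$-cycle. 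Let $R$ be the region lying between $\Gamma$ and $\R^d\times\{0\}$, oriented by the standard orientation of $\R^{d+1}$ where $f>0$ and by its reverse where $f<0$. The point of $R$ is that, integrating its locally constant integer coefficient up the upward-oriented vertical line $\ell_x$, one has $\int_{\ell_x}R=f(x)$ for generic $x$; and a short check of the induced boundary orientations gives $\partial R=(-1)^d(\Gamma-P)$.

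Now I would triangulate $R$ by coning. Pick a generic point $\hat w$ far below $P$ and put
$$Z:=\hat w\ast(\Gamma-P)=\sum_i\bigl(\hat w\ast S_i\bigr)-\sum_i\bigl(\hat w\ast(\sigma_i\times\{0\})\bigr),$$
a chain made of at most $2n$ genuine $(d+1)$-simplices. The cone formula $\partial(\hat w\ast c)=c-\hat w\ast\partial c$ together with $\partial\Gamma=\partial P$ gives $\partial Z=\Gamma-P$. The crucial elementary fact is then that a compactly supported polyhedral $(d+1)$-chain in $\R^{d+1}$ is determined by its boundary: such a chain is a compactly supported piecewise-constant $\Z$-valued function, and if its boundary vanishes it has no jumps and hence, being $0$ near infinity, is $0$. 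Applying this to $(-1)^d Z-R$ shows $R=(-1)^d Z$ — the cone $Z$ tiles $R$ with algebraic multiplicity exactly one, the ``degree one triangulation'' of the class bounded by $P$ and the graph of $f$. Consequently, for generic $x$,
$$f(x)=\int_{\ell_x}R=(-1)^d\Bigl(\sum_i\int_{\ell_x}(\hat w\ast S_i)-\sum_i\int_{\ell_x}(\hat w\ast(\sigma_i\times\{0\}))\Bigr)=\sum_{k=1}^{2n}\epsilon_k\,\tau(\Delta_k)(x),$$
where each $\epsilon_k\in\{\pm1\}$ and the $\Delta_k$ are the $2n$ cones, because the coefficient of a nondegenerate $(d+1)$-simplex $\Delta$ integrated up $\ell_x$ is $\pm$ the length of $\ell_x\cap\Delta$, i.e. $\pm\tau(\Delta)(x)$; tracking orientations pins the signs down ($\hat w$ below makes them $+$ on the $S_i$ and $-$ on the $\sigma_i\times\{0\}$), but this is immaterial below. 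Finally, a generic $\hat w$ far below $P$ makes every $\hat w\ast S_i$ and $\hat w\ast(\sigma_i\times\{0\})$ nondegenerate in the sense of the Definition, the only failures occurring when $\hat w$ or $\pi(\hat w)$ meets one of finitely many affine hyperplanes spanned by faces of the $S_i$ and $\sigma_i$.

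This already gives the first assertion: $f$ is a signed sum of at most $2n$ simplex functions — exactly the form a network needs, since the signs $\epsilon_k$ are absorbed into output weights. For the network, apply Lemma~\ref{lemma:simplex_function} to each $\tau(\Delta_k)$, place $2n$ disjoint copies of the fixed network $T$ of that lemma in parallel (padding with copies set to output $0$ when $f$ uses fewer than $n$ cells), and let an affine output layer form $\sum_k\epsilon_k(\cdot)$. The resulting fixed architecture $\Phi$ has the depth of $T$, width at most $2n$ times that of $T$, and size at most $2n$ times that of $T$ plus the output layer, which are the stated bounds with $C_0$ the depth constant of the lemma (raised by one if the convention forbids an affine last layer) and $C_1,C_2$ essentially twice the width and size constants of the lemma. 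The one place that needs genuine care is the orientation bookkeeping — verifying $\partial R=(-1)^d(\Gamma-P)$ and that each $\int_{\ell_x}(\hat w\ast S_i)$ is $\pm\tau$ with a consistent sign; everything else is the cone formula and the one-line fact that in Euclidean space a compactly supported top-dimensional chain is the unique filling of its boundary.
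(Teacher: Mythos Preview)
Your proposal is correct and follows essentially the same argument as the paper: form the $d$-cycle given by the support $P$ together with the graph of $f$ (the paper calls this an immersed sphere $S=C\cup C'$ after first filling the support to a ball), cone its $2n$ facets to a single generic point in $\R^{d+1}$, and read off $f=\sum\epsilon_i\tau(\Delta_i)$ from the resulting degree-one triangulation of the region between $P$ and the graph. Your chain-level bookkeeping (defining $R$, verifying $\partial R$, and invoking uniqueness of compactly supported top-dimensional fillings) makes explicit what the paper summarizes as ``degree~1 triangulation of the relative homology class,'' and your choice of $\hat w$ far below $P$ is a harmless specialization of the paper's ``generic point''; the network construction is identical.
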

\begin{proof}
Let's pick a function $f$ in $\PL(d,n)$. We may assume (by adding more simplices if necessary) that 
the support of $f$ is a polyhedral ball $C$ in $\R^d$ which we think of as the subspace of $\R^{d+1}$ with
last coordinate zero. Let $C' \subset \R^{d+1}$ be the graph of $f$ over $C$, thought of 
as an embedded polyhedral $d$-ball in $\R^{d+1}$. The union $S:=C \cup C'$ is an 
immersed $d-$sphere in $\R^{d+1}$. The mesh of simplices on which $f$ is linear defines a triangulation
of $C$ into $n$ simplices, which together with the obvious triangulation of $C'$ gives a
triangulation of $S$ with $2n$ simplices. 

Pick a generic point in $\R^{d+1}$ and cone the triangulation of $S$ to this point. This
gives a PL immersion of a triangulated $(d+1)$-ball $B$ into $\R^{d+1}$. The simplices of $B$
all map (by general position) to nondegenerate simplices $\Delta_i$ in $\R^{d+1}$; we give
each $\Delta_i$ a sign $\epsilon_i = \pm 1$ depending on whether the immersion is positively or
negatively oriented when restricted to $\Delta_i$. One way of expressing this in terms
familiar to algebraic topologists is to say that the $\Delta_i$ give a {\em degree 1 triangulation}
of the relative homology class bounded by $S$ (for an introduction to simplicial homology 
see e.g.\/ \cite{Hatcher}, Chapter 2).

By construction $f = \sum \epsilon_i \tau(\Delta_i)$. Let $T$ be a fixed DNN of depth $2\log_2(d)+2$
that can compute $\tau(\Delta)$ for any nondegenerate simplex $\Delta$ for some assignment of
weights. Then $\Phi$ may be built from $2n$ disjoint parallel copies of $T$, together with a single 
neuron at the end that sums the inputs with linear weights. Evidently a suitable assignment of 
weights and biases to $\Phi$ computes $f$.
\end{proof}

\section{Hyperbolic geometry}

For an introduction to the use of hyperbolic geometry to give lower bounds on the 
complexity of polyhedra, one can hardly do better than section 3 of \cite{Sleator_Tarjan_Thurston}
and we do not attempt to add anything to what they say.

If $f:\R^d \to \R$ is a PL function with support a compact polyhedron $P$, we 
say that a {\em simplicial representation of $f$} is a decomposition into simplex
functions $f = \sum \pm \tau(\Delta_i)$
where the $\Delta_i$ give a degree 1 triangulation of the relative homology class
bounded by the union of $P$ and its graph. We would like to bound the number of terms in
such a representation in terms of $f$. 

Let's introduce some notation and some terminology. We let $\HH^{d+1}$ denote hyperbolic space
of dimension $d+1$. If $S$ is a closed, oriented $d$-dimensional polyhedral manifold, we say an immersion
$g:S \to \HH^{d+1}$ is
{\em totally geodesic} if the image of each top dimensional polyhedral face of $S$ is a
totally geodesic $d$-dimensional polyhedron in $\HH^{d+1}$. 

\begin{theorem}
Let $f:\R^d \to \R$ be a PL function with support a compact polyhedron $P$. Let
$S$ be a polyhedral $d$-manifold obtained by gluing $P$ to the graph of $f|P$ (where 
the graph of $f|P$ is subdivided into the facets on which $f$ is linear). Then the number of
terms in any simplicial representation $f = \sum \pm \tau(\Delta_i)$ is bounded above
by $2n$, where $P$ may be decomposed into $n$ simplices on each of which $f$ is linear, 
and below by $V(f)/v_{d+1}$, where $V(f)$ is the supremum of the algebraic volumes in
$\HH^{d+1}$ enclosed by $g(S)$ over all totally geodesic immersions $g:S \to \HH^{d+1}$, and
$v_{d+1}$ is the volume of the regular ideal simplex in $\HH^{d+1}$.
\end{theorem}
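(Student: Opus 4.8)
I would prove the two bounds separately. The upper bound is nothing but Theorem~\ref{decomposition_theorem}: the construction there produces, from a triangulation of $S$ into $2n$ simplices, a decomposition $f=\sum\epsilon_i\,\tau(\Delta_i)$ in which the $\Delta_i$ --- the cones of the triangulation of $S$ to a generic point --- form a degree $1$ triangulation of the relative class bounded by $S$, i.e.\ a simplicial representation in the sense of this section with $2n$ terms; so the minimum number of terms over all simplicial representations is at most $2n$. All the real work is in the lower bound, and the plan is to transplant an arbitrary simplicial representation into $\HH^{d+1}$ and run the Sleator--Tarjan--Thurston volume estimate \cite{Sleator_Tarjan_Thurston} there.

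So fix a totally geodesic immersion $g\colon S\to\HH^{d+1}$ and a simplicial representation $f=\sum_{i=1}^{N}\epsilon_i\,\tau(\Delta_i)$. By definition the Euclidean chain $Z=\sum_i\epsilon_i\Delta_i$ in $\R^{d+1}$ is a degree $1$ relative cycle, so its boundary is a signed triangulation $\mathcal T$ of $S$ representing the fundamental class; and each simplex of $\mathcal T$, being a flat simplex contained in the polyhedral manifold $S$, lies in a single facet of $S$, hence is carried by $g$ into a single totally geodesic $\HH^{d}\subset\HH^{d+1}$. Next I would straighten: let $\Delta_i^{\mathrm h}\subset\HH^{d+1}$ be the geodesic simplex on the $g$-images of those vertices of $\Delta_i$ that lie on $S$, together with an arbitrary choice of points of $\HH^{d+1}$ for the remaining (interior) vertices. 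The boundary faces of the $\Delta_i^{\mathrm h}$ fit together into a simplexwise-geodesic map $\mathrm{str}(g)\colon S\to\HH^{d+1}$, and (straightening skeleton by skeleton, one facet of $S$ at a time) there is a homotopy $H\colon S\times[0,1]\to\HH^{d+1}$ from $g$ to $\mathrm{str}(g)$ whose restriction to $\phi\times[0,1]$, for each facet $\phi$ of $S$, lands in the totally geodesic $\HH^{d}$ containing $g(\phi)$.

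Then comes the estimate. Let $\omega$ be the volume form of $\HH^{d+1}$ and write $\omega=d\eta$ (possible as $\HH^{d+1}$ is contractible). By Stokes on each simplex,
\[
\sum_i\epsilon_i\operatorname{vol}\bigl(\Delta_i^{\mathrm h}\bigr)=\sum_i\epsilon_i\int_{\Delta_i^{\mathrm h}}\omega=\int_{\partial\left(\sum_i\epsilon_i\Delta_i^{\mathrm h}\right)}\eta=\int_{S}\mathrm{str}(g)^{*}\eta .
\]
This equals $\int_{S}g^{*}\eta$, since the difference is $\int_{S\times[0,1]}H^{*}\omega$ and $H^{*}\omega$ already vanishes on every $\phi\times[0,1]$ --- the $(d+1)$-form $\omega$ pulls back to $0$ from a subspace of dimension $\le d$. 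Moreover $\int_S g^{*}\eta$ depends only on $g$: two primitives of $\omega$ differ by a closed, hence exact, $d$-form, which integrates to $0$ over the closed manifold $S$. Extending $g$ over a bounding chain and applying Stokes once more identifies $\int_S g^{*}\eta$ with the algebraic volume $V_g(S)$ enclosed by $g(S)$. Since the volume of any geodesic simplex in $\HH^{d+1}$ has absolute value at most $v_{d+1}$ (Gauss--Bonnet for $d=1$, Haagerup--Munkholm for $d\ge 2$),
\[
V_g(S)=\sum_i\epsilon_i\operatorname{vol}\bigl(\Delta_i^{\mathrm h}\bigr)\ \le\ \sum_i\bigl|\operatorname{vol}\bigl(\Delta_i^{\mathrm h}\bigr)\bigr|\ \le\ N\,v_{d+1},
\]
so $N\ge V_g(S)/v_{d+1}$, and taking the supremum over totally geodesic immersions $g$ yields $N\ge V(f)/v_{d+1}$.

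The one genuinely delicate point will be the identity $\sum_i\epsilon_i\operatorname{vol}(\Delta_i^{\mathrm h})=V_g(S)$, and within it the claim that $\mathrm{str}(g)$ is homotopic to $g$ through a homotopy staying, facet by facet, inside totally geodesic subspaces of dimension $\le d$. This is precisely where the hypothesis that $g$ be totally geodesic enters: it lets the straightening be carried out one totally geodesic face at a time, so the homotopy never picks up ambient $(d+1)$-volume. Everything else is sign bookkeeping and Stokes' theorem, and a reader who prefers to avoid differential forms can get the same identity from the functoriality of the straightening chain map and the acyclicity of $\HH^{d+1}$.
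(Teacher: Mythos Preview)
Your proposal is correct and follows the same straightening approach as the paper: the upper bound cites Theorem~\ref{decomposition_theorem}, and the lower bound maps the simplicial representation into $\HH^{d+1}$ compatibly with $g$, straightens to geodesic simplices, and bounds each by $v_{d+1}$. The paper's own proof of the lower bound is a single sentence; you have written out in detail (via Stokes and the facet-by-facet homotopy) the identity $\sum_i\epsilon_i\operatorname{vol}(\Delta_i^{\mathrm h})=V_g(S)$ that the paper leaves implicit in the word ``straightening,'' and correctly isolated the totally geodesic hypothesis as the reason the homotopy picks up no $(d{+}1)$-volume.
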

\begin{proof}
The upper bound is given in the proof of Theorem~\ref{decomposition_theorem}. The 
lower bound follows by straightening: any simplicial representation of $f$ arises from a
degree one triangulation of the relative homology class bounded by $P$ and its graph
(by definition). Map each such simplex into $\HH^{d+1}$ compatibly with $g$, and then
straighten each simplex to be totally geodesic. Any totally geodesic simplex in
$\HH^{d+1}$ has volume at most $v_{d+1}$.
\end{proof}

\begin{conjecture}
Any identity $f = \sum \pm \tau(\Delta_i)$ arises from a simplicial representation of $f$
(i.e.\/ it arises from a degree 1 triangulation of a relative homology class).
\end{conjecture}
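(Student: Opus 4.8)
The plan is to recast the conjecture as an injectivity statement for the \emph{vertical X-ray transform}. Given an identity $f = \sum_i \epsilon_i \tau(\Delta_i)$ with the $\Delta_i$ nondegenerate, form the $(d+1)$-chain $Z := \sum_i \epsilon_i \Delta_i$ in $\R^{d+1}$; since $H_{d+1}(\R^{d+1}) = 0$ and there are no nonzero $(d+2)$-chains, $Z$ is simply the integer-valued compactly supported function $z := \sum_i \epsilon_i \chi_{\Delta_i}$, and it is determined by $\partial Z$. Writing $R$ for the region bounded by $P$ and the graph of $f$, and $R^\pm$ for its parts above and below $\R^d \times \{0\}$, the assertion that the identity arises from a simplicial representation is precisely that $z = \chi_{R^+} - \chi_{R^-}$ as chains. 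Now the hypothesis $f = \sum_i \epsilon_i \tau(\Delta_i)$ is nothing but
$$\int_{\ell_x} z\,dt \;=\; f(x) \;=\; \int_{\ell_x}\!\bigl(\chi_{R^+} - \chi_{R^-}\bigr)\,dt \qquad\text{for every } x \in \R^d ,$$
so the conjecture is equivalent to injectivity of the fiberwise X-ray transform $w \mapsto \bigl(x \mapsto \int_{\ell_x} w\,dt\bigr)$ on chains of the form $\sum \pm \Delta_i$. The first step is therefore to set up this dictionary carefully and to pin down the exact class of chains on which injectivity is being claimed.

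The second step --- and the main obstacle --- is that the vertical X-ray transform is wildly non-injective. Its kernel contains all ``vertical ghosts'' (for instance $\chi_A - \chi_{A + (0,\dots,0,h)}$), and such ghosts are readily assembled from signed sums of nondegenerate simplices: a simplex $\Delta$ and its reflection $\Delta^\ast$ in a horizontal hyperplane satisfy $\tau(\Delta) = \tau(\Delta^\ast)$, so $\chi_\Delta - \chi_{\Delta^\ast}$ is a nonzero ghost built from two nondegenerate simplices. (Relatedly, a simplex function $\tau(\Delta)$ already has the $1$-term identity $f = \tau(\Delta)$ even though its graph bounds not $\Delta$ but a pyramid over a polyhedron with up to $d^2$ faces.) So the conjecture cannot hold for an unrestricted collection $\{\Delta_i\}$, and to have any chance one must either impose a normalization --- e.g.\ require that the $\Delta_i$ all lie in a fixed slab determined by $f$, or that the identity be ``reduced'' so that $z$ is $\{0,\pm1\}$-valued with the supports of the positive and negative $\Delta_i$ stacked coherently over $P$ --- or reinterpret ``arises from'' as ``can be straightened into, without increasing the number of terms.'' Deciding which of these normalizations makes the statement both true and useful for the hyperbolic lower bound of the preceding theorem is, in my view, the real crux.

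Granting such a normalization, the third step is a straightening-and-cancellation argument: the difference $Z - R$ is a vertical ghost $G$, and over each chamber of the common refinement of the subdivision underlying $z$ and the mesh of $f$ one replaces the profile $z(x,\cdot)$ by the standard profile $\chi_{[0,f(x)]}$ (resp.\ $-\chi_{[f(x),0]}$). This leaves $\sum_i \epsilon_i \tau(\Delta_i)$ unchanged and turns $Z$ into $R$; what remains is the combinatorial bound that the rearranged region carries a degree $1$ triangulation with at most as many simplices as one started with, which I would attack by bounding the new simplices in terms of the number of folds of the $\Delta_i$ over $\R^d$ and the facets of $f$, via a shelling or an Euler-characteristic count. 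A secondary point is the sign bookkeeping --- checking that after rearrangement the orientations assemble into a degree $1$ relative cycle rather than a higher multiple, which should follow from the X-ray identity together with the fact that the $d$-sphere $S$ separates $\R^{d+1}$.
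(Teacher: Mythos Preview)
There is nothing in the paper to compare against: the statement is labeled a \emph{Conjecture} and is given no proof. So your proposal cannot be judged by agreement with the paper's argument --- there isn't one.

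That said, your analysis is not a proof of the conjecture; it is (correctly) an argument that the conjecture as literally stated is in trouble. Your reformulation via the vertical X-ray transform is apt, and the ghosts you exhibit are genuine. The single-simplex example is already decisive: for a generic nondegenerate $(d{+}1)$-simplex $\Delta$, the identity $f=\tau(\Delta)$ is a one-term sum, yet the region bounded by $P$ and the graph of $\tau(\Delta)$ is the pyramid of Lemma~\ref{lemma:simplex_function}, not $\Delta$ itself; the chain $+\Delta$ has boundary $\partial\Delta$, which is not the cycle $[S]$, so it is not a degree~1 triangulation of the relative class bounded by $S$. Likewise your reflection pair $\Delta,\Delta^\ast$ gives two distinct one-term identities for the same $f$ whose underlying chains have different boundaries. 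Unless ``arises from'' is read very loosely, these are counterexamples.

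Where your proposal overreaches is in the third step. Even granting a normalization that rules out the trivial ghosts, your ``straightening-and-cancellation'' sketch does not establish the crucial combinatorial bound --- that after replacing $z(x,\cdot)$ by the standard profile one can retriangulate with \emph{no more} simplices than one started with. The suggested shelling/Euler-characteristic count is not an argument; in fact the rearrangement you describe can increase the number of maximal linearity cells in the vertical direction before it decreases them, and there is no a priori reason the simplex count is monotone under it. So as a proof plan this step is a genuine gap, and the honest summary is: you have given strong evidence that the conjecture needs reformulation (and have proposed reasonable candidate hypotheses), but you have not proved any version of it.
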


\end{document}